\documentclass{llncs}

\usepackage{llncsdoc}
\usepackage{amsmath,amssymb,amstext}
\usepackage{graphicx}
\usepackage{algorithm, algorithmicx, algpseudocode}

\usepackage{amsthm}

\newcommand{\mc}{\mathcal}
\newcommand{\mb}{\mathbf}
\newcommand{\bb}{\mathbb}

\newcommand{\ov}{\overline}
\newcommand{\wt}{\widetilde}

\begin{document}
\title{Generalization and Robustness of Batched Weighted Average Algorithm with V-geometrically Ergodic Markov Data}

\author{Nguyen Viet Cuong\inst{1} \and Lam Si Tung Ho\inst{2} \and Vu Dinh\inst{3}}

\institute{Department of Computer Science, National University of Singapore, 117417, Singapore, \email{nvcuong@comp.nus.edu.sg}
\and
Department of Statistics, University of Wisconsin-Madison, WI 53706, USA, \email{lamho@stat.wisc.edu}
\and
Department of Mathematics, Purdue University, IN 47907, USA, \email{vdinh@math.purdue.edu}}

\maketitle

\begin{abstract}
We analyze the generalization and robustness of the batched weighted average algorithm for V-geometrically ergodic Markov data. This algorithm is a good alternative to the empirical risk minimization algorithm when the latter suffers from overfitting or when optimizing the empirical risk is hard. For the generalization of the algorithm, we prove a PAC-style bound on the training sample size for the expected $L_1$-loss to converge to the optimal loss when training data are V-geometrically ergodic Markov chains. For the robustness, we show that if the training target variable's values contain bounded noise, then the generalization bound of the algorithm deviates at most by the range of the noise. Our results can be applied to the regression problem, the classification problem, and the case where there exists an unknown deterministic target hypothesis.
\end{abstract}

\section{Introduction}
The generalization ability of learning algorithms has been studied extensively in statistical learning theory \cite{vapnik1998statistical}. One main assumption in traditional learning theory when studying this problem is that data, drawn from an unknown distribution, are independent and identically distributed (IID) \cite{valiant1984theory}. Although this assumption is useful for proving theoretical results, it may not hold in applications such as speech recognition or market prediction where data are usually temporal in nature \cite{steinwart2009learning}.

One attempt to relax this IID data assumption is to consider cases where training data form a Markov chain with certain mixing properties. A common algorithm that has been analyzed is the empirical risk minimization (ERM) algorithm, which tries to find the hypothesis minimizing the empirical loss on the training data. Generalization bounds of this well-known algorithm were proven for exponentially strongly mixing data \cite{zou2009generalization}, uniformly ergodic data \cite{zou2009learning}, and V-geometrically ergodic data \cite{zou2012}.

In this paper, we investigate another learning algorithm, the \emph{batched weighted average} (BWA) algorithm, when training data form a V-geometrically ergodic Markov chain. This algorithm is a batch version of the online weighted average algorithm with $L_1$-loss \cite{kivinen1999averaging}. Given the training data and a set of real-valued hypotheses, the BWA algorithm learns the weight of each hypothesis based on its prediction on the training data. During testing, the algorithm makes prediction based on the weighted average prediction of all the hypotheses on the testing data.

An advantage of the BWA algorithm when compared to the ERM algorithm is that the former may be less suffered from overfitting when the hypothesis space is large or complex \cite{freund2004generalization,freund2001averaging}. The BWA algorithm is also a good alternative to the ERM algorithm in cases where optimizing the empirical risk is hard.

We prove the generalization of the BWA algorithm by providing a PAC-style bound on the training sample size for the expected $L_1$-loss of the algorithm to converge to the optimal loss with high probability, assuming that training data are V-geometrically ergodic. The main idea of our proof is to bound the normalized weights of all the bad hypotheses whose expected loss is far from the optimal. This idea comes from the observation that when more training data are seen, the normalized weights of the bad hypotheses will eventually be dominated by those of the better hypotheses.

Using the same proof technique, we then prove the robustness of the BWA algorithm when training data form a V-geometrically ergodic Markov chain with noise. By robustness, we mean the ability of an algorithm to generalize when there is a small amount of noise in the training data. For the BWA algorithm, we show that if the training values of the target variable are allowed to contain bounded noise, then the generalization bound of the algorithm deviates at most by the range of the noise.

Our main results are proven mainly for the regression problem and the case where the pairs of observation and target variables' values are V-geometrically ergodic. However, we also give two lemmas to show that the results can be easily applied to other common settings such as the classification problem and the case where there exists an unknown deterministic target hypothesis.

This paper chooses to analyze the BWA algorithm for data that are V-geometrically ergodic. Theoretically, V-geometrically ergodic Markov chains have many good properties that make them appealing for analyses. Firstly, they are ``nice'' general state space Markov chains as they mix geometrically fast \cite{meyn2009markov}. Secondly, the fact that these chains can be defined on a general, possibly uncountable, state space makes their learning models more general than previous models which learn from finite or countable state space Markov chains \cite{gamarnik2003extension}. Thirdly, the V-geometrically ergodic assumption is not too restrictive since it includes all uniformly ergodic chains as well as all ergodic chains on a finite state space \cite{zou2012,xu2012robustness}. Nevertheless, we emphasize that our proof idea can be applied to other types of mixing Markov chains if we have the uniform convergence rate of the empirical loss for these chains.

\section{Related Work}
The BWA algorithm considered in this paper is a batch version of the online weighted average algorithm \cite{kivinen1999averaging}. The main differences are that the BWA algorithm uses an infinite real-valued hypothesis space and is trained from batch data. The original weighted average algorithm is a generalization of the weighted majority algorithm \cite{littlestone1989weighted}. Both algorithms were analyzed for the online setting \cite{kivinen1999averaging,littlestone1989weighted} and a variant of the weighted majority algorithm was analyzed for the classification problem with batched IID data \cite{freund2004generalization}. However, to the best of our knowledge, there was no rigorous treatment for the generalization and robustness of the BWA algorithm for non-IID data.

The proofs in our paper use a previous result on the uniform convergence rate of the empirical loss for V-geometrically ergodic Markov chains \cite{zou2012}. Convergence of the empirical loss is a fundamental problem in statistics and statistical learning theory, and it has been studied for other types of Markov chains such as $\alpha$-mixing \cite{zou2009generalization,vidyasagar2005convergence,zou2007performance}, $\beta$-mixing \cite{yu1994rates,mohri2009rademacher}, $\phi$-mixing \cite{yu1994rates}, and uniformly ergodic \cite{zou2009learning} chains. These results can be used with our proof idea to prove generalization and robustness bounds of the BWA algorithm for those chains.

The robustness of learning algorithms in the presence of noise has been studied for Valiant's PAC model with IID data \cite{kearns1998efficient,blum2003noise,aslam1993general,goldman1995can}. Recently, Xu et al. \cite{xu2012robustness} analyzed the generalization of learning algorithms based on their algorithmic robustness, the ability of an algorithm to achieve similar performances on similar training and testing data. Their analyses hold for both IID and uniformly ergodic Markov data. Another related concept is stability, the ability of an algorithm to return similar hypotheses when small changes are made to the training data \cite{mohri2010stability}. Stability-based generalization bounds of learning algorithms were proven by Mohri et al. for $\phi$-mixing and $\beta$-mixing data \cite{mohri2010stability}. Our bounds, in contrast, are obtained without measuring the algorithmic robustness or stability of the BWA algorithm.

\section{Preliminaries} \label{sec:pre}
We now introduce the V-geometrically ergodic Markov chains and the settings for our analyses. We will follow the definitions in \cite{zou2012}. We also review a result on the uniform convergence rate of the empirical loss for V-geometrically ergodic Markov data \cite{zou2012} which will be used in the subsequent sections.

\subsection{V-geometrically Ergodic Markov Chain}
Let $(\mc{Z}, \mc{F})$ be a measurable space, where $\mc{Z}$ is a compact subset of $\bb{R}^N$ ($N \geq 1$) and $\mc{F}$ is a $\sigma$-algebra on $\mc{Z}$. A Markov chain on $\mc{Z}$ is a sequence of random variables $(Z_i)_{i=1}^\infty$ together with a set of transition probabilities $\{ P^n(A|z) : z \in \mc{Z} \text{ and } A \in \mc{F}\}$, where $P^n(A|z)$ denotes the probability that a chain starting from $z$ will be in $A$ after $n$ steps. By Markov property,
\[ P^n(A|z) = \bb{P}(Z_{n+m} \in A | Z_i, i \leq m, Z_m = z) = \bb{P}(Z_{n+m} \in A | Z_m = z) \]
where $\bb{P}(.)$ is the probability of an event. For any two probability measures $P_1$ and $P_2$ on $(\mc{Z}, \mc{F})$, we define their total variation distance as $\|P_1 - P_2\|_{TV} = 2 \sup_{A \in \mc{F}}|P_1(A) - P_2(A)|$. A V-geometrically ergodic Markov chain can be defined as follows.

\begin{definition} \label{def:vgeo}
A Markov chain $(Z_i)_{i=1}^\infty$ is called V-geometrically ergodic with respect to a measurable function $V : \mc{Z} \to [1,\infty)$ if there exist $\gamma < \infty$, $\rho < 1$, and $B < \infty$ such that for every $z_j,z_k \in \mc{Z}$ and $n \geq 1$, we have
\[ \|P^n(z_j|z_k) - \pi(z_j)\|_{TV} \leq \gamma \rho^n V(z_k) \]
and 
\[ \int_{\mc{Z}}{V(z)\pi(dz) < B} \]
where $\pi$ is the stationary distribution of the Markov chain $(Z_i)_{i=1}^\infty$.
\end{definition}

A special case of V-geometrically ergodic Markov chain is uniformly ergodic Markov chain, which has $V \equiv 1$ (the constant function $1$) \cite{zou2012,meyn2009markov}. So, the results in this paper also hold for the uniformly ergodic Markov data. Throughout our paper, we mostly consider the first $n$ elements $(Z_i)_{i=1}^n$ of a V-geometrically ergodic Markov chain $(Z_i)_{i=1}^\infty$. For convenience, we will also call $(Z_i)_{i=1}^n$ a V-geometrically ergodic Markov chain. Whenever we consider $\pi$, $\gamma$, $\rho$ and $B$ of $(Z_i)_{i=1}^n$, we actually refer to those of $(Z_i)_{i=1}^\infty$.

\subsection{Settings}
\label{sec:settings}
We assume that the training data $(Z_i)_{i=1}^n = (X_i, Y_i)_{i=1}^n$ form a V-geometrically ergodic Markov chain on a state space $\mc{Z} = \mc{X} \times \mc{Y}$, where $\mc{X}$ is a compact subset of $\bb{R}^d$ ($d \geq 1$) and $\mc{Y}$ is a compact subset of $\bb{R}$. The variables $X_i$'s are usually called the observation variables and $Y_i$'s are usually called the target variables.

Let $\mc{H}$ be the set of all hypotheses, where a hypothesis $h$ is a function from $\mc{X}$ to $\mc{Y}$. Throughout this paper, we make the following assumption: $\mc{H}$ is contained in a ball $B(C^q(\mc{X}))$ of a H\"older space $C^q(\mc{X})$ for some $q > 0$, which is similar to the assumption in \cite{zou2012}. The H\"older space $C^q(\mc{X})$ is the space of all continuous functions on $\mc{X}$ with the following norm \cite{zou2012,zhou2003capacity}:
\[
\|h\|_{C^q(\mc{X})} = \|h\|_\infty + \sup_{x_1 \ne x_2; x_1,x_2 \in \mc{X}} \frac{|h(x_1)-h(x_2)|}{\|x_1-x_2\|_{\bb{R}^d}^q}
\]
where $\|h\|_\infty = \sup_{x \in \mc{X}}|h(x)|$ and $\|.\|_{\bb{R}^d}$ is a metric defined on $\bb{R}^d$.

In this paper, we consider the $L_1$-loss $L_1(h,z) = |h(x)-y|$ of a hypothesis $h \in \mc{H}$ on an example $z = (x,y) \in \mc{Z}$. Because of the boundedness of $\mc{X}$ and $\mc{Y}$, there exist $M > 0$ and $L > 0$ such that
\[ M = \sup_{h \in \mc{H}} \max_{(x,y) \in \mc{X} \times \mc{Y}} |h(x)-y| \]
and
\[ L = \sup_{\substack{h_1,h_2 \in \mc{H} \\ h_1 \ne h_2}} \max_{(x,y) \in \mc{X} \times \mc{Y}} \frac{\big | |h_1(x)-y| - |h_2(x)-y| \big |}{\|h_1-h_2\|_\infty}. \]

For any data $S = (X_i, Y_i)_{i=1}^n$, we define the empirical loss of the hypothesis $h$ on $S$ as
\[ l_S(h) = \frac{1}{n}\sum_{i=1}^n |h(X_i) - Y_i| \]
and the expected loss of $h$ with respect to the stationary distribution $\pi$ of the Markov chain as
\[ l(h) = \bb{E}_{(X,Y) \sim \pi}|h(X) - Y|. \]

\subsection{Uniform Convergence Rate of the Empirical Loss}
\label{sec:uni-convergence}
We review a previous result \cite{zou2012} which gives a PAC-style bound on the training set size for the empirical loss to converge uniformly to the expected loss when training data are V-geometrically ergodic Markov chains. This result will be used to prove the generalization and robustness bounds for the BWA algorithm in subsequent sections. To state the result, we first need to define the \emph{covering number}, the quantity for measuring the capacity of a hypothesis space.
\begin{definition}
For every $\epsilon > 0$, the covering number $\mc{N}(\mc{H},\epsilon)$ of the hypothesis space $\mc{H}$ is the smallest integer number $m \in \bb{N}$ such that $\mc{H}$ can be covered by $m$ balls with radius $\epsilon$.
\end{definition}

Note that the covering number $\mc{N} (\mc{H}, \epsilon)$ is defined with respect to the norm $\| \cdot \|_{C^q(\mc{X})}$ and thus is data independent. This is different from another type of covering number which is data dependent \cite{bousquet2004introduction}. With the assumption that $\mc{H} \subseteq B(C^q(\mc{X}))$, there exists $c > 0$ such that for every $\epsilon > 0$, we have $\mc{N}(\mc{H},\epsilon) \leq \exp \{ c \epsilon^{-2d/q} \}$ (see \cite{zhou2003capacity}). Thus, the covering number is finite in our setting.

We also need a concept of effective sample size $n_e$ for a V-geometrically ergodic Markov chain. The effective sample size plays the same role in our analyses as the sample size in the IID case. This concept is usually used when the observations are not independent (e.g., hierarchical autocorrelated observations \cite{ane2008analysis}).
\begin{definition}
Let $S = (X_i, Y_i)_{i=1}^n$ be a V-geometrically ergodic Markov chain with $\rho$ satisfying Definition \ref{def:vgeo}. The effective sample size $n_e$ is
\[ n_e = \left \lfloor \frac{n}{\lceil \{8n/\ln(1/\rho)\}^{1/2} \rceil} \right \rfloor \]
where $\lfloor m \rfloor$ ($\lceil m \rceil$) denote the floor (ceiling) of $m$.
\end{definition}

For a V-geometrically ergodic Markov chain, $n_e \rightarrow \infty$ as $n \rightarrow \infty$. The uniform convergence rate for the empirical loss when training data are V-geometrically ergodic Markov chains is stated in Lemma \ref{thmUnifbd} below. This lemma is a direct consequence of Theorem 2 in \cite{zou2012}.
\begin{lemma}
Let the data $S = (X_i, Y_i)_{i=1}^n$ be a V-geometrically ergodic Markov chain with $\gamma$, $\rho$ and $B$ satisfying Definition \ref{def:vgeo}. For all $\epsilon \in (0,3M]$,  $\delta\in (0,1)$, if the effective sample size $n_e$ satisfies
\[ n_e \geq \frac{8M^2}{\epsilon^2} \left( \ln\frac{2}{\delta} + \ln(1+\gamma B e^{-2}) + \ln \mc{N} \left (\mc{H},\frac{\epsilon}{4L} \right ) \right), \] 
then
\[ \bb{P} \left( \forall h \in \mc{H}, | l_{S}(h) - l(h) | < \epsilon  \right) \geq 1 - \delta. \]
\label{thmUnifbd}
\end{lemma}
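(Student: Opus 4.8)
The plan is to obtain Lemma~\ref{thmUnifbd} as a specialization of Theorem~2 in \cite{zou2012}. That theorem establishes, for a V-geometrically ergodic chain and a class of real-valued functions on the state space whose absolute values are bounded by a fixed constant, a PAC-style lower bound on the effective sample size $n_e$ guaranteeing that the empirical averages of \emph{all} functions in the class are uniformly within $\epsilon$ of their $\pi$-expectations with probability at least $1-\delta$, the bound being expressed through the covering number of that function class and the ergodicity constants $\gamma, B, \rho$. So the first step is to instantiate it with the loss class $\mathcal{G} = \{\, z = (x,y) \mapsto |h(x)-y| : h \in \mathcal{H} \,\}$ on the state space $\mathcal{Z} = \mathcal{X} \times \mathcal{Y}$. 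By the definition of $M$, every function in $\mathcal{G}$ has absolute value at most $M$, so the boundedness hypothesis of Theorem~2 holds with that constant; this is the origin of the factor $8M^2/\epsilon^2$ and of the admissible range $\epsilon \in (0,3M]$. Moreover $l_S(h)$ and $l(h)$ are exactly the empirical average and the $\pi$-expectation of the member $z \mapsto |h(x)-y|$ of $\mathcal{G}$, so a uniform convergence statement over $\mathcal{G}$ is literally the desired statement over $\mathcal{H}$.

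The remaining step is to bound the covering number of $\mathcal{G}$ by that of $\mathcal{H}$. Since $\|h_1-h_2\|_\infty \le \|h_1-h_2\|_{C^q(\mathcal{X})}$, any $(\epsilon/4L)$-net of $\mathcal{H}$ with respect to $\|\cdot\|_{C^q(\mathcal{X})}$ is also an $(\epsilon/4L)$-net with respect to $\|\cdot\|_\infty$; and by the definition of $L$, if $\|h_1-h_2\|_\infty \le \epsilon/4L$ then $\bigl|\,|h_1(x)-y| - |h_2(x)-y|\,\bigr| \le L\|h_1-h_2\|_\infty \le \epsilon/4$ for all $(x,y) \in \mathcal{X}\times\mathcal{Y}$. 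Hence the induced losses form an $(\epsilon/4)$-net of $\mathcal{G}$, giving $\mathcal{N}(\mathcal{G}, \epsilon/4) \le \mathcal{N}(\mathcal{H}, \epsilon/4L)$. Substituting this into the sufficient condition supplied by Theorem~2 and identifying the remaining terms (the confidence term $\ln(2/\delta)$ and the term $\ln(1+\gamma B e^{-2})$ carrying the ergodicity parameters) reproduces exactly the displayed lower bound on $n_e$, and the conclusion $\mathbb{P}(\forall h \in \mathcal{H}, |l_S(h) - l(h)| < \epsilon) \ge 1-\delta$ follows immediately.

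The main obstacle is bookkeeping rather than anything conceptual: one must verify that the covering-number scale and the numerical constants appearing in Theorem~2 of \cite{zou2012} align with the $4L$ inside $\mathcal{N}(\mathcal{H},\epsilon/4L)$ and the constant $8$ in front of $M^2/\epsilon^2$ as written here, and confirm that Theorem~2's own hypotheses — in particular its admissible range for the accuracy parameter, its boundedness requirement, and its use of the effective sample size $n_e$ — are met throughout $\epsilon \in (0,3M]$. Once the reduction to the loss class $\mathcal{G}$ and the Lipschitz-based covering-number comparison are in place, no further probabilistic argument is required, since all the concentration work is already done inside Theorem~2.
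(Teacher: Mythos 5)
Your proposal is correct and takes the same route as the paper, which offers no proof of its own beyond the remark that the lemma ``is a direct consequence of Theorem 2 in \cite{zou2012}''; your reduction to the loss class via the bound $M$ and the Lipschitz comparison $\mc{N}(\mc{G},\epsilon/4) \leq \mc{N}(\mc{H},\epsilon/4L)$ is exactly the specialization that citation leaves implicit. The only caveat, which you already flag, is the bookkeeping of constants against the precise statement of Theorem 2 in \cite{zou2012}.
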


\section{The Batched Weighted Average Algorithm}
In this section, we introduce the BWA algorithm. In contrast to the ERM algorithm which makes prediction based on a single empirical loss minimizing hypothesis, the BWA algorithm makes prediction based on the weighted average predictions of all the hypotheses in the hypothesis space. The \emph{pseudo code} for the BWA algorithm is given in Algorithm \ref{algo:average}.

Inputs for the BWA algorithm are a parameter $\alpha < 1$ and a training data sequence $S = (X_i, Y_i)_{i=1}^n$, which is a V-geometrically ergodic Markov chain on the state space $\mc{X} \times \mc{Y}$. The algorithm computes a weight for each hypothesis $h$ in the hypothesis space $\mc{H}$ by:
\[ w_n(h) = \alpha^{n l_S(h)}. \]

Then, the weights of the hypotheses are normalized to obtain a probability density function with respect to the measure $\mu$ (probability mass function if $\mc{H}$ is finite) over the hypothesis space:
\[ P_n(h) = \frac{w_n(h)}{\int_{\mc{H}} w_n(h) d\mu}. \]

We will call $P_n(h)$ the normalized weight of $h$. Given a new example $X$, we use the normalized weights to compute the weighted average prediction of all the hypotheses on $X$:
\[ \ov{h}_n(X) = \int_{\mc{H}} P_n(h) h(X) d\mu. \]

In the algorithm, we assume there exists a probability measure $\mu$ on $\mc{H}$ such that $\mu(\mc{H}) = \int_{\mc{H}} d\mu = 1$. The measure $\mu$ plays a similar role to the prior distribution in Bayesian analysis \cite{mackay1992bayesian}. It reflects our initial belief about the distribution of the hypotheses in $\mc{H}$. During the execution of the algorithm, we gradually update our belief, via the weights, based on the prediction of each hypothesis on the training data. The existence of such a measure $\mu$ was also assumed in \cite{freund2004generalization} for averaged classifiers.

When $\mc{H}$ is infinite, we usually cannot compute the value of $\ov{h}_n$ exactly. In practice, we can apply the Markov Chain Monte Carlo method \cite{brooks1998markov} to approximate $\ov{h}_n$. For instance, we can sample $m$ hypotheses $h_1, h_2, \ldots, h_m$ from the unnormalized density distribution $w_n(h) \mu(h)$ and approximate $\ov{h}_n(X)$ by $\frac{1}{m} \sum_{i=1}^m{h_i(X)}$.

\begin{algorithm}[tb]
\caption{The Batched Weighted Average (BWA) Algorithm}
\label{algo:average}
\begin{algorithmic}
\Require $\alpha < 1$ and training data $(X_i, Y_i)_{i=1}^n$.
\State $w_0(h) \leftarrow 1$ for all $h \in \mc{H}$
  \For{$i = 1 \to n$}
    \For{$h \in \mc{H}$}
       $w_i(h) \leftarrow \alpha^{|h(X_i)-Y_i|} \cdot w_{i-1}(h)$
    \EndFor
  \EndFor
\State $\displaystyle P_n(h) \leftarrow \frac{w_n(h)}{\int_{\mc{H}} w_n(h) d\mu}$ for all $h \in \mc{H}$
\State \Return $\displaystyle \ov{h}_n(X) = \int_{\mc{H}} P_n(h) h(X) d\mu$
\end{algorithmic}
\end{algorithm}

\section{Generalization Bound for BWA Algorithm} \label{sec:AH}
In this section, we prove the generalization bound for the BWA algorithm when training data are V-geometrically ergodic Markov chains. For the analyses to be valid, we assume the following sets are measurable with respect to $\mu$:
\[ \{ h \in \mc{H} : l(h) \leq \epsilon \} \text{, for all } \epsilon \in \bb{R}. \]

Since Algorithm \ref{algo:average} does not assume the existence of a perfect hypothesis in $\mc{H}$, we need to define the optimal expected loss of $\mc{H}$. Let $\mc{H}_\gamma = \{ h \in \mc{H}: l(h) \leq \gamma \}$, the optimal expected loss of $\mc{H}$ is defined as $\gamma^* = \inf\{\gamma: \mu(\mc{H}_\gamma) > 0 \}$. Note that $\gamma^*$ always exists since $\mu(\mc{H}_M) = 1$ and $\{\gamma: \mu(\mc{H}_\gamma) > 0 \} \neq \emptyset$. For all $\epsilon > 0$, let $\mc{V}_{\epsilon}=\mu(\mc{H}_{\gamma^*+\epsilon})$ be the volume of all the hypotheses with expected loss at most $\gamma^* + \epsilon$. By definition of $\gamma^*$, for all $\epsilon > 0$, we always have $\mc{V}_{\epsilon} > 0$.

The idea of using $\mc{V}_\epsilon$ was proposed in \cite{freund2004generalization} to analyze the generalization bounds of averaged classifiers in the IID case. The argument for considering $\mc{V}_{\epsilon}$ is that when $\mc{H}$ is uncountable, a comparison between the average hypothesis $\ov{h}_n$ and a single best hypothesis is meaningless because a single hypothesis mostly has measure $0$. Hence, we should compare $\ov{h}_n$ to a set of good hypotheses that has positive measure, as suggested in \cite{freund2004generalization}.

To prove the generalization bound, we need Lemma \ref{thm01} that bounds the normalized weights $P_n(h)$ of all the bad hypotheses. Specifically, this lemma proves that if the effective sample size is large enough, the normalized weights of all the bad hypotheses are sufficiently small with high probability.

\begin{lemma}
Let the data $S = (X_i, Y_i)_{i=1}^n$ be a V-geometrically ergodic Markov chain with $\gamma$, $\rho$ and $B$ satisfying Definition \ref{def:vgeo}. For all $\epsilon \in (0,3M]$ and $\delta\in (0,1)$, if the effective sample size $n_e$ satisfies
\[ n_e \geq \frac{288M^2}{\epsilon^2} \left( \ln\frac{2}{\delta} + \ln(1+\gamma B e^{-2}) + \ln \mc{N} \left (\mc{H},\frac{\epsilon}{24L} \right ) \right), \] 
then
\[ \bb{P} \left( \sup_{h \in \mc{H} \setminus \mc{H}_{\gamma^* + \epsilon}} P_n(h) \leq \frac{\alpha^{n \epsilon/6}}{\mc{V}_{\epsilon/2}} \right) \geq 1 - \delta. \]
\label{thm01}
\end{lemma}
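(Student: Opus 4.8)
The plan is to reduce the whole statement to the uniform convergence bound of Lemma~\ref{thmUnifbd}, applied at the scale $\epsilon/6$ rather than $\epsilon$. First I would observe that the hypothesis $n_e \geq \frac{288M^2}{\epsilon^2}\big(\ln\frac2\delta + \ln(1+\gamma B e^{-2}) + \ln\mc{N}(\mc{H},\frac{\epsilon}{24L})\big)$ is literally the hypothesis of Lemma~\ref{thmUnifbd} with $\epsilon$ replaced by $\epsilon/6$, since $288 = 8\cdot 6^2$ and $\mc{N}(\mc{H},\frac{\epsilon/6}{4L}) = \mc{N}(\mc{H},\frac{\epsilon}{24L})$; moreover $\epsilon/6 \in (0,3M]$ whenever $\epsilon \in (0,3M]$, so the lemma applies. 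Hence, under the stated assumption on $n_e$, with probability at least $1-\delta$ the event $\mc{E}$ that $|l_S(h)-l(h)| < \epsilon/6$ holds for all $h \in \mc{H}$ is true. It then suffices to show that $\mc{E}$ \emph{deterministically} forces $\sup_{h \in \mc{H}\setminus\mc{H}_{\gamma^*+\epsilon}} P_n(h) \leq \alpha^{n\epsilon/6}/\mc{V}_{\epsilon/2}$.

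To carry out this deterministic implication, fix any realization of $S$ for which $\mc{E}$ holds, and recall that $P_n(h) = \alpha^{n l_S(h)}\big/\int_{\mc{H}}\alpha^{n l_S(h')}\,d\mu(h')$ with $0<\alpha<1$, so that $t\mapsto\alpha^t$ is strictly decreasing and $w_n \le 1$ (hence all integrals are finite, $\mu$ being a probability measure). For the numerator: if $h \notin \mc{H}_{\gamma^*+\epsilon}$ then $l(h) > \gamma^*+\epsilon$, so on $\mc{E}$ we get $l_S(h) > \gamma^* + 5\epsilon/6$ and therefore $\alpha^{n l_S(h)} < \alpha^{n(\gamma^*+5\epsilon/6)}$. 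For the denominator: restrict the integral to $\mc{H}_{\gamma^*+\epsilon/2}$, whose $\mu$-measure is $\mc{V}_{\epsilon/2}>0$ by the definition of $\gamma^*$; for $h' \in \mc{H}_{\gamma^*+\epsilon/2}$ we have $l(h')\leq \gamma^*+\epsilon/2$, hence on $\mc{E}$, $l_S(h') < \gamma^*+2\epsilon/3$, so $\alpha^{n l_S(h')} > \alpha^{n(\gamma^*+2\epsilon/3)}$, giving $\int_{\mc{H}}\alpha^{n l_S(h')}\,d\mu(h') \geq \alpha^{n(\gamma^*+2\epsilon/3)}\,\mc{V}_{\epsilon/2}$. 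Dividing and using $5\epsilon/6 - 2\epsilon/3 = \epsilon/6$ yields $P_n(h) < \alpha^{n\epsilon/6}/\mc{V}_{\epsilon/2}$ for every $h \in \mc{H}\setminus\mc{H}_{\gamma^*+\epsilon}$; taking the supremum gives the claim. (Measurability of the sets $\{h : l(h)\leq\epsilon\}$ is exactly what makes $\mc{V}_{\epsilon/2}$ and these restricted integrals legitimate.)

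I do not expect a genuine obstacle here: once the right scale $\epsilon/6$ is identified, the argument is bookkeeping. The one thing that must be set up carefully is the choice of the two split points — controlling the bad hypotheses (those with $l(h)>\gamma^*+\epsilon$) from above via $\alpha^{n(\gamma^*+5\epsilon/6)}$, and the normalizing integral from below via the good set $\mc{H}_{\gamma^*+\epsilon/2}$ and $\alpha^{n(\gamma^*+2\epsilon/3)}$ — so that the two $\epsilon/6$ deviations combine to leave a clean surviving gap of $\epsilon/6$ in the exponent; this is also what forces the constant $288$ and the radius $\epsilon/24L$ in the hypothesis. A minor point is verifying $\epsilon/6 \in (0,3M]$ and that $0<\alpha<1$ so that the monotonicity of $t\mapsto\alpha^t$ and the boundedness of $w_n$ are available.
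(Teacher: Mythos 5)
Your proposal is correct and follows essentially the same route as the paper: apply Lemma~\ref{thmUnifbd} at scale $\epsilon/6$ (whence the constant $288$ and radius $\epsilon/(24L)$), use the $\epsilon/2$ gap between $\mc{H}\setminus\mc{H}_{\gamma^*+\epsilon}$ and $\mc{H}_{\gamma^*+\epsilon/2}$ to retain an $\epsilon/6$ gap in the empirical losses, and lower-bound the normalizing integral by restricting to $\mc{H}_{\gamma^*+\epsilon/2}$ to produce the factor $\mc{V}_{\epsilon/2}$. The only cosmetic difference is that the paper phrases the ratio via $r_n(h)=w_n(h)^{1/n}$ and the norm $\|r_n\|_n$, whereas you bound numerator and denominator directly; the two computations are identical.
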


\begin{proof}
Denote $r_n(h) = \alpha^{l_S(h)} = (w_n(h))^{1/n}$ and $\displaystyle \|r_n\|_n = \left( \int_{\mc{H}}{|r_n(h)|^n d\mu} \right)^{1/n}$. 
We can write: $\displaystyle P_n(h) = \frac{w_n(h)}{\int_{\mc{H}}{w_n(h) d\mu}} = \left( \frac{r_n(h)}{\|r_n\|_n} \right)^n$. 
If the effective sample size satisfies
\[ n_e \geq \frac{288M^2}{\epsilon^2} \left( \ln\frac{2}{\delta} + \ln(1+\gamma B e^{-2}) + \ln \mc{N} \left (\mc{H},\frac{\epsilon}{24L} \right ) \right), \] 
then by Lemma \ref{thmUnifbd}, with probability at least $1 - \delta$, we both have:
\[ |l_S(h) - l(h)| < \epsilon/6 \text{, for all } h \in \mc{H} \setminus \mc{H}_{\gamma^* + \epsilon} \]
\[ |l_S(h') - l(h')| < \epsilon/6 \text{, for all } h' \in \mc{H}_{\gamma^* + \epsilon/2}. \]

For all $h \in \mc{H} \setminus \mc{H}_{\gamma^*+\epsilon}$ and $h' \in \mc{H}_{\gamma^*+\epsilon/2}$, we also have $l(h) - l(h') \geq \epsilon/2$. Therefore, with probability at least $1 - \delta$, for all $h \in \mc{H} \setminus \mc{H}_{\gamma^*+\epsilon}$ and $h' \in \mc{H}_{\gamma^*+\epsilon/2}$,
\[ l_S(h) - l_S(h') \geq \epsilon/6. \]

Since $\alpha < 1$, we have $\alpha^{l_S(h) - l_S(h')} \leq \alpha^{\epsilon/6}$. Hence, $r_n(h) \leq \alpha^{\epsilon/6} r_n(h')$. Note that this inequality holds for all $h \in \mc{H} \setminus \mc{H}_{\gamma^* + \epsilon}$ and $h' \in \mc{H}_{\gamma^* + \epsilon/2}$. Therefore,
\[ \sup_{h \in \mc{H} \setminus \mc{H}_{\gamma^* + \epsilon}} r_n(h) \leq \alpha^{\epsilon/6} \inf_{h' \in \mc{H}_{\gamma^* + \epsilon/2}} r_n(h'). \]

Let $r^* = \inf_{h' \in \mc{H}_{\gamma^* + \epsilon/2}} r_n(h')$, we have
\[
\|r_n\|_n = \left( \int_{\mc{H}}{|r_n(h)|^n d\mu} \right)^{1/n} \geq \left( \int_{\mc{H}_{\gamma^* + \epsilon/2}}{(r^*)^n d\mu} \right)^{1/n} = r^* \mu (\mc{H}_{\gamma^* + \epsilon/2})^{1/n} = r^* \mc{V}_{\epsilon/2}^{1/n}. \]

Therefore, $\displaystyle \sup_{h \in \mc{H} \setminus \mc{H}_{\gamma^* + \epsilon}} P_n(h) = \sup_{h \in \mc{H} \setminus \mc{H}_{\gamma^* + \epsilon}} \left ( \frac{r_n(h)}{\|r_n\|_n} \right )^n \leq \frac{\alpha^{n \epsilon/6}}{\mc{V}_{\epsilon/2}}$.
\end{proof}

Using Lemma \ref{thm01}, we now prove the following generalization bound for the BWA algorithm.
\begin{theorem}
Let the data $S = (X_i, Y_i)_{i=1}^n$ be a V-geometrically ergodic Markov chain with $\gamma$, $\rho$ and $B$ satisfying Definition \ref{def:vgeo}. For all $\epsilon \in (0,3M]$ and $\delta\in (0,1)$, if the effective sample size $n_e$ satisfies
\begin{eqnarray*}
n_e &\geq& \frac{1152M^2}{\epsilon^2} \left( \ln\frac{2}{\delta} + \ln (1+\gamma B e^{-2}) + \ln \mc{N} \left (\mc{H},\frac{\epsilon}{48L} \right ) \right) + \left( \frac{3 \left( \ln \frac{1}{\mc{V}_{\epsilon/4}} + \ln\frac{2M}{\epsilon} \right)}{2 \epsilon \ln \frac{1}{\alpha} \ln \frac{1}{\rho} } \right )^{1/2},
\end{eqnarray*}
then
\[ \bb{P} \left( l(\ov{h}_n) - \gamma^* \leq \epsilon \right) \geq 1 - \delta. \]
\label{thm02}
\end{theorem}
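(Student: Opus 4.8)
The plan is to bound the excess loss $l(\ov{h}_n) - \gamma^*$ by splitting the average hypothesis into contributions from the good hypotheses (those in $\mc{H}_{\gamma^*+\epsilon/\text{const}}$) and the bad hypotheses (those outside), then controlling each piece. First I would observe that by convexity of the $L_1$-loss (via Jensen's inequality applied to the integral defining $\ov{h}_n$), for any fixed test point $z=(x,y)$ we have $|\ov{h}_n(x)-y| \le \int_{\mc{H}} P_n(h)\,|h(x)-y|\,d\mu$, and hence $l(\ov{h}_n) \le \int_{\mc{H}} P_n(h)\, l(h)\, d\mu$. This reduces the problem to bounding the $P_n$-weighted average of the expected losses.

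Next I would partition the integral $\int_{\mc{H}} P_n(h)\, l(h)\, d\mu$ over $\mc{H}_{\gamma^*+\epsilon'}$ and its complement, where $\epsilon'$ is a suitable fraction of $\epsilon$ (the nested constants $\epsilon/4$, $\epsilon/2$ appearing in $\mc{V}_{\epsilon/4}$ etc.\ suggest taking $\epsilon' = \epsilon/2$). On the good set, $l(h) \le \gamma^* + \epsilon'$ and $\int_{\mc{H}_{\gamma^*+\epsilon'}} P_n(h)\, d\mu \le 1$, so that piece contributes at most $\gamma^* + \epsilon'$. On the bad set, I would use $l(h) \le M$ together with the bound from Lemma \ref{thm01}: under the stated sample-size condition (applied with the appropriate $\epsilon$-scaling), $\sup_{h \notin \mc{H}_{\gamma^*+\epsilon'}} P_n(h) \le \alpha^{n\epsilon'/6}/\mc{V}_{\epsilon'/2}$ with probability at least $1-\delta$, so the bad piece contributes at most $M \cdot \mu(\mc{H}\setminus\mc{H}_{\gamma^*+\epsilon'}) \cdot \alpha^{n\epsilon'/6}/\mc{V}_{\epsilon'/2} \le M\, \alpha^{n\epsilon'/6}/\mc{V}_{\epsilon'/2}$. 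Thus on the high-probability event, $l(\ov{h}_n) - \gamma^* \le \epsilon' + M\,\alpha^{n\epsilon'/6}/\mc{V}_{\epsilon'/2}$.

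It then remains to force the second term below $\epsilon - \epsilon' = \epsilon/2$, i.e.\ to require $\alpha^{n\epsilon'/6} \le (\epsilon/2)\,\mc{V}_{\epsilon'/2}/M$, equivalently $n\,\frac{\epsilon'}{6}\ln\frac{1}{\alpha} \ge \ln\frac{1}{\mc{V}_{\epsilon'/2}} + \ln\frac{2M}{\epsilon}$. Solving for $n$ gives the condition $n \ge \frac{6(\ln(1/\mc{V}_{\epsilon'/2}) + \ln(2M/\epsilon))}{\epsilon'\ln(1/\alpha)}$. Converting this lower bound on $n$ into a lower bound on $n_e$ using the definition $n_e = \lfloor n/\lceil\{8n/\ln(1/\rho)\}^{1/2}\rceil\rfloor$ — roughly $n_e \gtrsim n^{1/2}(\ln(1/\rho)/8)^{1/2}$ — produces the $\left(\frac{3(\ln(1/\mc{V}_{\epsilon/4})+\ln(2M/\epsilon))}{2\epsilon\ln(1/\alpha)\ln(1/\rho)}\right)^{1/2}$ summand in the theorem (with $\epsilon' = \epsilon/2$ so $\epsilon'/2 = \epsilon/4$). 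Finally I would combine this with the Lemma \ref{thm01} requirement, which with the $\epsilon/2$-scaling becomes $n_e \ge \frac{288M^2}{(\epsilon/2)^2}(\cdots) = \frac{1152M^2}{\epsilon^2}(\ln\frac{2}{\delta} + \ln(1+\gamma B e^{-2}) + \ln\mc{N}(\mc{H},\frac{\epsilon}{48L}))$, matching the first summand; taking the sum of the two requirements (rather than the max) is a safe sufficient condition.

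The main obstacle I anticipate is the bookkeeping in translating the sample-size requirement on $n$ into one on $n_e$: the effective sample size involves nested floors and ceilings and a square-root relationship, so I would need a clean elementary inequality of the form "$n_e \ge m \implies n \ge \text{something}$" (or its contrapositive) to justify that the stated $n_e$ bound implies $n\,\frac{\epsilon'}{6}\ln\frac{1}{\alpha} \ge \ln\frac{1}{\mc{V}_{\epsilon'/2}} + \ln\frac{2M}{\epsilon}$. The convexity step and the good/bad split are routine; the only other minor care needed is ensuring the two events (the one from Lemma \ref{thm01}, which itself comes from Lemma \ref{thmUnifbd}) are the same event so that no union bound or doubling of $\delta$ is required — which is fine since Lemma \ref{thm01} already packages exactly the high-probability statement we need.
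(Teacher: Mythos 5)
Your proposal is correct and follows essentially the same route as the paper's proof: Jensen's inequality to pass to the $P_n$-weighted average of expected losses, a split at $\mc{H}_{\gamma^*+\epsilon/2}$, Lemma \ref{thm01} applied at scale $\epsilon/2$ (yielding the $1152M^2/\epsilon^2$ and $\mc{N}(\mc{H},\epsilon/48L)$ constants and the $\alpha^{n\epsilon/12}/\mc{V}_{\epsilon/4}$ tail), the condition $n \geq \frac{12}{\epsilon\ln(1/\alpha)}\left(\ln\frac{1}{\mc{V}_{\epsilon/4}}+\ln\frac{2M}{\epsilon}\right)$ to kill the bad-set term, and the conversion of that requirement on $n$ into the square-root summand on $n_e$, with the two requirements summed. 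The paper performs the $n$-to-$n_e$ conversion with exactly the same level of informality you anticipate, so there is no additional idea you are missing.
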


\begin{proof}
We have
\begin{eqnarray*}
&& l(\ov{h}_n) = \bb{E}_{(X,Y) \sim \pi} |\ov{h}_n(X) - Y| = \bb{E}_{(X,Y) \sim \pi} \left| \int_{\mc{H}} P_n(h) h(X) d\mu - Y \right| \\
&\leq& \bb{E}_{(X,Y) \sim \pi} \left[ \int_{\mc{H}} {P_n(h)|h(X) - Y| d\mu} \right] \\
&=& \bb{E}_{(X,Y) \sim \pi} \int_{\mc{H}_{\gamma^*+\epsilon/2}} {P_n(h)|h(X) - Y| d\mu} + \bb{E}_{(X,Y) \sim \pi} \int_{\mc{H} \setminus \mc{H}_{\gamma^*+\epsilon/2}} {P_n(h)|h(X) - Y| d\mu} \\
&=& \int_{\mc{H}_{\gamma^*+\epsilon/2}} P_n(h) \bb{E}_{(X,Y) \sim \pi} |h(X) - Y| d\mu + \int_{\mc{H} \setminus \mc{H}_{\gamma^*+\epsilon/2}} P_n(h) \bb{E}_{(X,Y) \sim \pi} |h(X) - Y| d\mu.
\end{eqnarray*}

Notice that for all $h \in \mc{H}_{\gamma^*+\epsilon/2}$, we have: $\displaystyle \bb{E}_{(X,Y) \sim \pi} |h(X) - Y| \le \gamma^*+ \frac{\epsilon}{2}$.
On the other hand, from Lemma \ref{thm01}, if the effective sample size satisfies
\[ n_e \geq \frac{1152M^2}{\epsilon^2} \left( \ln\frac{2}{\delta} + \ln (1+\gamma B e^{-2}) + \ln \mc{N} \left (\mc{H},\frac{\epsilon}{48L} \right ) \right), \] 
then with probability at least $1 - \delta$, we have: $\displaystyle \sup_{h \in \mc{H} \setminus \mc{H}_{\gamma^* + \epsilon/2}} P_n(h) \leq \frac{\alpha^{n \epsilon/12}}{\mc{V}_{\epsilon/4}}$.

Thus,
\begin{eqnarray*}
l(\ov{h}_n) &\leq&  (\gamma^*+ \frac{\epsilon}{2}) \int_{\mc{H}_{\gamma^*+\epsilon/2}} P_n(h) d\mu + \frac{\alpha^{n \epsilon/12}}{\mc{V}_{\epsilon/4}} \int_{\mc{H} \setminus \mc{H}_{\gamma^*+\epsilon/2}} \bb{E}_{(X,Y) \sim \pi} |h(X) - Y| d\mu \\
&\leq&  (\gamma^*+\frac{\epsilon}{2}) \int_{\mc{H}} P_n(h) d\mu + \frac{\alpha^{n \epsilon/12}}{\mc{V}_{\epsilon/4}} \int_{\mc{H} \setminus \mc{H}_{\gamma^*+\epsilon/2}} M d\mu \\
&\leq&  (\gamma^*+ \frac{\epsilon}{2}) + \frac{\alpha^{n \epsilon/12}}{\mc{V}_{\epsilon/4}} M.
\end{eqnarray*}

Note that when $\displaystyle n \geq \frac{12}{\epsilon \ln \frac{1}{\alpha}} \left( \ln \frac{1}{\mc{V}_{\epsilon/4}} + \ln \frac{2M}{\epsilon} \right)$, we have $\displaystyle \frac{\alpha^{n \epsilon/12}}{\mc{V}_{\epsilon/4}} M \leq \frac{\epsilon}{2}$.
From the definition of the effective sample size, in order to ensure the previous condition for the sample size $n$, it is sufficient to let
\[ n_e \geq \left( \frac{3 \left(\ln \frac{1}{\mc{V}_{\epsilon/4}} + \ln\frac{2M}{\epsilon} \right)}{2 \epsilon \ln \frac{1}{\alpha} \ln \frac{1}{\rho} } \right )^{1/2}. \]

Hence, for 
\begin{eqnarray*}
n_e &\geq& \frac{1152M^2}{\epsilon^2} \left( \ln\frac{2}{\delta} + \ln (1+\gamma B e^{-2}) + \ln \mc{N} \left (\mc{H},\frac{\epsilon}{48L} \right ) \right) + \left( \frac{3 \left( \ln \frac{1}{\mc{V}_{\epsilon/4}} + \ln\frac{2M}{\epsilon} \right)}{2 \epsilon \ln \frac{1}{\alpha} \ln \frac{1}{\rho} } \right )^{1/2},
\end{eqnarray*}
we have $\bb{P} \left( l(\ov{h}_n) \leq \gamma^* + \epsilon \right) \geq 1 - \delta$.
\end{proof}

In Theorem \ref{thm02}, the convergence rate of the expected loss to the optimal loss depends not only on the covering number $\mc{N} \left (\mc{H},\frac{\epsilon}{48L} \right )$ but also on $\mc{V}_{\epsilon/4}$. From the definition of $\mc{V}_{\epsilon/4}$, this value depends mostly on the distribution $\mu$ on $\mc{H}$. If $\mu$ gives higher probability to hypotheses with small expected loss, $\mc{V}_{\epsilon/4}$ will be closer to $1$ and the convergence rate will be better. Thus, it is desirable for the BWA algorithm to choose a good distribution $\mu$. This is analogous to the Bayesian setting where we also need to choose a good prior for the learning algorithm. When $\mc{H}$ is finite, $\mc{V}_{\epsilon/4} = \mu(\mc{H}_{\gamma^*})$ for sufficiently small $\epsilon$. In this case, $\mc{V}_{\epsilon/4}$ does not depend on $\epsilon$, but only depends on $\mu$.

The bound in Theorem \ref{thm02} and all the subsequent bounds depend on the values of $\gamma$, $\rho$ and $B$. For one V-geometrically ergodic Markov chain, there may be many values of ($\gamma$, $\rho$, $B$) satisfying Definition \ref{def:vgeo}. Thus, to obtain good bounds, we need to choose a value of ($\gamma$, $\rho$, $B$) that makes the bounds as tight as possible. This corresponds to selecting small values for these parameters.

When comparing various V-geometrically ergodic Markov chains, Theorem \ref{thm02} suggests that the convergence rate is better if $\gamma$, $\rho$ and $B$ are smaller. Small values of these parameters correspond to chains that converge quickly to the stationary distribution $\pi$. This result is expected because the expected loss $l(\cdot)$ is defined with respect to a random example drawn from $\pi$. In the limit when $\gamma \rightarrow 0$ and $\rho \rightarrow 0$, the chains become more IID-like and the effective sample size bound tends to $\frac{1152M^2}{\epsilon^2} ( \ln\frac{2}{\delta} + \ln \mc{N} (\mc{H},\frac{\epsilon}{48L}))$. 

From the discussion in Section \ref{sec:uni-convergence}, there exists $c > 0$ such that for $\epsilon > 0$, we have $\mc{N}(\mc{H},\epsilon) \leq \exp \{ c \epsilon^{-2d/q} \}$. Therefore, we can deduce the following corollary of Theorem \ref{thm02} in which the bound does not depend on the covering number.

\begin{corollary}
Let the data $S = (X_i, Y_i)_{i=1}^n$ be a V-geometrically ergodic Markov chain with $\gamma$, $\rho$ and $B$ satisfying Definition \ref{def:vgeo}. For all $\epsilon \in (0,3M]$ and  $\delta\in (0,1)$, if the effective sample size $n_e$ satisfies
\begin{eqnarray*}
n_e &\geq& \frac{1152M^2}{\epsilon^2} \left( \ln\frac{2}{\delta} + \ln (1+\gamma B e^{-2}) + c (\frac{\epsilon}{48L})^{-2d/q} \right) + \left(  \frac{3( \ln \frac{1}{\mc{V}_{\epsilon/4}} + \ln\frac{2M}{\epsilon})}{2 \epsilon \ln \frac{1}{\alpha} \ln \frac{1}{\rho} } \right )^{1/2},
\end{eqnarray*}
then $\displaystyle \,\,\, \bb{P} \left( l(\ov{h}_n) - \gamma^* \leq \epsilon \right) \geq 1 - \delta$.
\label{cor01}
\end{corollary}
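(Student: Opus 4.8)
\textbf{Proof proposal for Corollary \ref{cor01}.}
The plan is to derive the corollary directly from Theorem \ref{thm02} by replacing the (data-independent) covering number with its explicit polynomial-exponent upper bound, and then invoking a monotonicity argument on the effective sample size condition. Concretely, recall from the discussion in Section \ref{sec:uni-convergence} that, under the standing assumption $\mc{H} \subseteq B(C^q(\mc{X}))$, there exists a constant $c > 0$ (depending only on $\mc{H}$, $d$, $q$) such that $\mc{N}(\mc{H},\epsilon) \leq \exp\{c\,\epsilon^{-2d/q}\}$ for every $\epsilon > 0$. Applying this with $\epsilon$ replaced by $\epsilon/(48L)$ gives
\[
\ln \mc{N}\!\left(\mc{H},\frac{\epsilon}{48L}\right) \;\leq\; c\left(\frac{\epsilon}{48L}\right)^{-2d/q}.
\]

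The second step is the monotonicity observation. The right-hand side of the effective sample size condition in Theorem \ref{thm02} is nondecreasing in the quantity $\ln \mc{N}(\mc{H},\epsilon/(48L))$ (it appears with a positive coefficient $1152M^2/\epsilon^2$, and the remaining additive term is unchanged). Hence, substituting the upper bound above only enlarges the threshold: if $n_e$ exceeds
\[
\frac{1152M^2}{\epsilon^2}\left(\ln\frac{2}{\delta} + \ln(1+\gamma B e^{-2}) + c\left(\frac{\epsilon}{48L}\right)^{-2d/q}\right) + \left(\frac{3\left(\ln\frac{1}{\mc{V}_{\epsilon/4}} + \ln\frac{2M}{\epsilon}\right)}{2\epsilon \ln\frac{1}{\alpha}\ln\frac{1}{\rho}}\right)^{1/2},
\]
then it also exceeds the threshold required in Theorem \ref{thm02}. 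Theorem \ref{thm02} then yields $\bb{P}(l(\ov{h}_n) - \gamma^* \leq \epsilon) \geq 1 - \delta$ directly.

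I do not anticipate a genuine obstacle here: the argument is essentially a substitution plus a one-line monotonicity remark, and all the analytic content is already carried by Theorem \ref{thm02} and the covering-number estimate from \cite{zhou2003capacity}. The only point worth stating carefully is that the inequality direction is the favorable one — we are upper-bounding $\ln \mc{N}$, which makes the sample-size requirement more stringent, so any $n_e$ meeting the corollary's (larger) requirement automatically meets the theorem's. I would therefore keep the proof to three or four sentences in the final write-up.
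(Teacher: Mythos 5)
Your proposal is correct and matches the paper's own (implicit) argument exactly: the paper derives Corollary \ref{cor01} from Theorem \ref{thm02} by substituting the covering-number estimate $\mc{N}(\mc{H},\epsilon) \leq \exp\{c\,\epsilon^{-2d/q}\}$ from \cite{zhou2003capacity}, which is precisely your substitution-plus-monotonicity step. No gap; the write-up can indeed be kept to a few sentences.
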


Since $n_e \rightarrow \infty$ as $n \rightarrow \infty$, by the above corollary, we have $\bb{P} \left( l(\ov{h}_n) - \gamma^* \leq \epsilon \right) \to 1$ for every $\epsilon \in (0,3M]$. Hence, the BWA algorithm is consistent.

\section{Robustness Bound for BWA Algorithm} \label{sec:robustness}
In this section, we consider the robustness of the BWA algorithm when the target variable's values in the training data contain a small amount of noise. In particular, instead of the settings in Section \ref{sec:settings}, we assume that the training data are now $(\wt{Z}_i)_{i=1}^n = (X_i, \wt{Y}_i)_{i=1}^n = (X_i, Y_i + \xi_i)_{i=1}^n$, where $\wt{Y}_i = Y_i + \xi_i$ and $(X_i, Y_i)_{i=1}^n$ form a V-geometrically ergodic Markov chain with stationary distribution $\pi$. We further assume that the noise are bounded, i.e., $-\Xi/2 \leq \xi_i \leq \Xi/2$ for all $i$. However, we will not make any assumption on the distribution of noise.

With this setting, the BWA algorithm that we consider is essentially the same as Algorithm \ref{algo:average}, except that now the algorithm does not have access to the true target variables $Y_i$'s. Instead, it uses the noisy target variables $\wt{Y}_i$ and updates the hypothesis weights according to the following formula:
\[ w_i(h) \leftarrow \alpha^{|h(X_i)-\wt{Y}_i|} \cdot w_{i-1}(h). \]
Hence, $w_n(h) = \alpha^{n l_{\wt{S}}(h)}$, where $l_{\wt{S}}(h)$ is the (noisy) empirical loss of the hypothesis $h$ on the noisy dataset $\wt{S} = (X_i, \wt{Y}_i)_{i=1}^n$:
\[ l_{\wt{S}}(h) = \frac{1}{n}\sum_{i=1}^n |h(X_i) - \wt{Y}_i| = \frac{1}{n}\sum_{i=1}^n |h(X_i) - Y_i - \xi_i|. \]

For any hypothesis $h$, the expected loss $l(h)$ is defined as in Section \ref{sec:settings} with respect to the stationary distribution $\pi$ of the Markov chain $(X_i, Y_i)_{i=1}^n$. We also let $\gamma$, $\rho$ and $B$ be the parameters satisfying Definition \ref{def:vgeo} for the chain $(X_i, Y_i)_{i=1}^n$. The optimal expected loss $\gamma^*$ is defined as in Section \ref{sec:AH}.

We now prove that with this setting, the generalization bound of the BWA algorithm deviates at most by $\Xi$. The steps for the proof are similar to those in Section \ref{sec:AH}. First, we prove the following uniform convergence bound for V-geometrically ergodic Markov chain with bounded noise.
\begin{lemma}
Let the data $\wt{S} = (X_i, \wt{Y}_i)_{i=1}^n = (X_i, Y_i + \xi_i)_{i=1}^n$ be a V-geometrically ergodic Markov chain with bounded noise. For all $\epsilon \in (0,3M]$,  $\delta\in (0,1)$, if the effective sample size $n_e$ satisfies
\[ n_e \geq \frac{8M^2}{\epsilon^2} \left( \ln\frac{2}{\delta} + \ln(1+\gamma B e^{-2}) + \ln \mc{N} \left (\mc{H},\frac{\epsilon}{4L} \right ) \right), \] 
then $\displaystyle \,\,\, \bb{P} \left( \forall h \in \mc{H}, | l_{\wt{S}}(h) - l(h) | < \epsilon + \frac{\Xi}{2}   \right) \geq 1 - \delta$.
\label{thmUnifbd-noisy}
\end{lemma}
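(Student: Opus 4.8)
The plan is to reduce the noisy statement to the clean uniform convergence bound already established in Lemma \ref{thmUnifbd}, by controlling the discrepancy between the noisy empirical loss $l_{\wt{S}}(h)$ and the clean empirical loss $l_S(h)$ uniformly over $\mc{H}$.

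First I would observe that for every example index $i$ and every hypothesis $h$, the reverse triangle inequality gives
\[ \big| \, |h(X_i) - Y_i - \xi_i| - |h(X_i) - Y_i| \, \big| \leq |\xi_i| \leq \frac{\Xi}{2}, \]
using the boundedness assumption $-\Xi/2 \leq \xi_i \leq \Xi/2$. Averaging over $i = 1, \dots, n$ yields the uniform (deterministic, noise-distribution-free) bound
\[ \sup_{h \in \mc{H}} \big| l_{\wt{S}}(h) - l_S(h) \big| \leq \frac{\Xi}{2}. \]

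Next I would apply Lemma \ref{thmUnifbd} to the clean chain $S = (X_i, Y_i)_{i=1}^n$, which by hypothesis is V-geometrically ergodic with the given $\gamma, \rho, B$; note the effective sample size $n_e$ in this lemma is exactly the one governed by $\rho$ of this clean chain, so the stated condition on $n_e$ is precisely the one needed. Thus, with probability at least $1 - \delta$, we have $|l_S(h) - l(h)| < \epsilon$ for all $h \in \mc{H}$ simultaneously. On that same high-probability event, the triangle inequality combined with the deterministic bound above gives, for all $h \in \mc{H}$,
\[ \big| l_{\wt{S}}(h) - l(h) \big| \leq \big| l_{\wt{S}}(h) - l_S(h) \big| + \big| l_S(h) - l(h) \big| < \frac{\Xi}{2} + \epsilon, \]
which is the claimed conclusion. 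I do not anticipate a genuine obstacle here: the only point requiring care is making sure that the ergodicity, the parameters $\gamma, \rho, B$, the effective sample size $n_e$, and the expected loss $l(\cdot)$ all refer to the underlying clean chain $(X_i, Y_i)_{i=1}^n$ and its stationary distribution $\pi$, so that Lemma \ref{thmUnifbd} applies verbatim while the noise contributes only the additive $\Xi/2$ slack.
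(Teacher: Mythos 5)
Your proposal is correct and follows essentially the same route as the paper's own proof: bound $\sup_{h}|l_{\wt{S}}(h)-l_S(h)|\leq \Xi/2$ deterministically via the reverse triangle inequality, invoke Lemma \ref{thmUnifbd} on the clean chain, and combine with the triangle inequality. Your added remark that $n_e$, the ergodicity parameters, and $l(\cdot)$ all refer to the underlying clean chain is a point the paper leaves implicit but is exactly the right thing to check.
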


\begin{proof}
Let $S = (X_i, Y_i)_{i=1}^n$ and $l_S(h)$ be defined as in Section \ref{sec:settings}. For all $h$,
\begin{eqnarray*}
& & | l_{\wt{S}}(h) - l_S(h) | 
= \frac{1}{n} \left| \sum_{i=1}^n \left( |h(X_i) - Y_i - \xi_i| - |h(X_i) - Y_i| \right) \right| \\
&\leq& \frac{1}{n} \sum_{i=1}^n \left| |h(X_i) - Y_i - \xi_i| - |h(X_i) - Y_i| \right|
\leq \frac{1}{n} \sum_{i=1}^n | \xi_i | \leq \frac{\Xi}{2}.
\end{eqnarray*}
By Lemma \ref{thmUnifbd}, if the effective sample size $n_e$ satisfies
\[ n_e \geq \frac{8M^2}{\epsilon^2} \left( \ln\frac{2}{\delta} + \ln(1+\gamma B e^{-2}) + \ln \mc{N} \left (\mc{H},\frac{\epsilon}{4L} \right ) \right), \]
then $\bb{P} \left( \forall h \in \mc{H}, | l_S(h) - l(h) | < \epsilon \right) \geq 1 - \delta$. In this case,
$| l_{\wt{S}}(h) - l(h) | \leq | l_{\wt{S}}(h) -  l_S(h) | + | l_S(h) - l(h) | < \frac{\Xi}{2} + \epsilon$. Hence, Lemma \ref{thmUnifbd-noisy} holds.
\end{proof}

Using Lemma \ref{thmUnifbd-noisy}, we can prove the following lemma, which is an analogy of Lemma \ref{thm01}.
\begin{lemma}
Let the data $\wt{S} = (X_i, \wt{Y}_i)_{i=1}^n = (X_i, Y_i + \xi_i)_{i=1}^n$ be a V-geometrically ergodic Markov chain with bounded noise. For all $\epsilon \in (0,3M]$ and $\delta\in (0,1)$, if the effective sample size $n_e$ satisfies
\[ n_e \geq \frac{288M^2}{\epsilon^2} \left( \ln\frac{2}{\delta} + \ln(1+\gamma B e^{-2}) + \ln \mc{N} \left (\mc{H},\frac{\epsilon}{24L} \right ) \right), \] 
then $\displaystyle \,\,\, \bb{P} \left( \sup_{h \in \mc{H} \setminus \mc{H}_{\gamma^* + \epsilon + \Xi}} P_n(h) \leq \frac{\alpha^{n \epsilon/6}}{\mc{V}_{\epsilon/2}} \right) \geq 1 - \delta$.
\label{thm01-noisy}
\end{lemma}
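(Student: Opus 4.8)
The plan is to mirror the proof of Lemma \ref{thm01} almost line for line, substituting the noisy uniform convergence bound (Lemma \ref{thmUnifbd-noisy}) for the noise-free one (Lemma \ref{thmUnifbd}) and tracking how the extra noise term $\Xi$ enters and then cancels. First I would set $r_n(h) = \alpha^{l_{\wt{S}}(h)} = (w_n(h))^{1/n}$ and $\|r_n\|_n = \left( \int_{\mc{H}} |r_n(h)|^n d\mu \right)^{1/n}$, so that, exactly as in the noise-free case, $P_n(h) = \left( r_n(h)/\|r_n\|_n \right)^n$. This step is purely algebraic; it is unaffected by noise since the weights are still of the form $w_n(h) = \alpha^{n l_{\wt{S}}(h)}$, just with the noisy empirical loss.

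Next I would invoke Lemma \ref{thmUnifbd-noisy} with accuracy parameter $\epsilon/6$ in place of $\epsilon$. The hypothesis stated in the lemma, $n_e \geq \frac{288M^2}{\epsilon^2}\left( \ln\frac{2}{\delta} + \ln(1+\gamma B e^{-2}) + \ln\mc{N}(\mc{H}, \frac{\epsilon}{24L}) \right)$, is exactly the hypothesis of Lemma \ref{thmUnifbd-noisy} after the substitution $\epsilon \mapsto \epsilon/6$ (this turns $8M^2/(\epsilon/6)^2$ into $288M^2/\epsilon^2$ and the covering radius $\frac{\epsilon/6}{4L}$ into $\frac{\epsilon}{24L}$). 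Hence with probability at least $1-\delta$ we simultaneously have $|l_{\wt{S}}(h) - l(h)| < \epsilon/6 + \Xi/2$ for every $h \in \mc{H} \setminus \mc{H}_{\gamma^*+\epsilon+\Xi}$ and $|l_{\wt{S}}(h') - l(h')| < \epsilon/6 + \Xi/2$ for every $h' \in \mc{H}_{\gamma^*+\epsilon/2}$.

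Then I would use the gap between expected losses: by definition, if $h \in \mc{H} \setminus \mc{H}_{\gamma^*+\epsilon+\Xi}$ and $h' \in \mc{H}_{\gamma^*+\epsilon/2}$ then $l(h) - l(h') > (\gamma^*+\epsilon+\Xi) - (\gamma^*+\epsilon/2) = \epsilon/2 + \Xi$. Combining this with the two concentration inequalities above gives, on the high-probability event, $l_{\wt{S}}(h) - l_{\wt{S}}(h') > (\epsilon/2 + \Xi) - 2(\epsilon/6 + \Xi/2) = \epsilon/6$; the two copies of $\Xi/2$ cancel the $+\Xi$ enlargement of the bad set, which is precisely why the bad set must be enlarged by exactly $\Xi$. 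From here the argument is identical to Lemma \ref{thm01}: since $\alpha < 1$ we get $r_n(h) \leq \alpha^{\epsilon/6} r_n(h')$ for all such pairs, hence $\sup_{h \in \mc{H}\setminus\mc{H}_{\gamma^*+\epsilon+\Xi}} r_n(h) \leq \alpha^{\epsilon/6}\, r^*$ where $r^* := \inf_{h' \in \mc{H}_{\gamma^*+\epsilon/2}} r_n(h')$; restricting the integral defining $\|r_n\|_n$ to $\mc{H}_{\gamma^*+\epsilon/2}$ yields $\|r_n\|_n \geq r^* \mc{V}_{\epsilon/2}^{1/n}$ (using $\mc{V}_{\epsilon/2} > 0$); and dividing and raising to the $n$-th power gives $\sup_{h \in \mc{H}\setminus\mc{H}_{\gamma^*+\epsilon+\Xi}} P_n(h) \leq \alpha^{n\epsilon/6}/\mc{V}_{\epsilon/2}$.

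The only real obstacle is bookkeeping: checking that feeding the accuracy parameter $\epsilon/6$ into Lemma \ref{thmUnifbd-noisy} reproduces the exact constants and covering radius in the statement, and verifying the arithmetic $\epsilon/2 + \Xi - 2(\epsilon/6 + \Xi/2) = \epsilon/6$ so that the remaining slack is the same $\epsilon/6$ as in the noise-free proof. There is no new probabilistic content beyond what Lemma \ref{thmUnifbd-noisy} already supplies.
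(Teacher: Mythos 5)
Your proof is correct and takes exactly the approach the paper intends: the paper's own proof is a one-sentence sketch deferring to Lemma~\ref{thm01} with $r_n(h)=\alpha^{l_{\wt{S}}(h)}$ and the enlarged bad set $\mc{H}\setminus\mc{H}_{\gamma^*+\epsilon+\Xi}$, and your write-up supplies precisely the missing details, including the key cancellation $(\epsilon/2+\Xi)-2(\epsilon/6+\Xi/2)=\epsilon/6$ that explains why the bad set is enlarged by exactly $\Xi$. Nothing to correct.
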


\begin{proof}
The proof for this lemma uses the same technique as that of Lemma \ref{thm01}, except that we define $r_n(h) = \alpha^{l_{\wt{S}}(h)}$ and replace Lemma \ref{thmUnifbd} by Lemma \ref{thmUnifbd-noisy} with all $h \in \mc{H} \setminus \mc{H}_{\gamma^*+\epsilon + \Xi}$ and $h' \in \mc{H}_{\gamma^*+\epsilon/2}$.
\end{proof}

Using Lemma \ref{thm01-noisy}, we can prove the following robustness bound.
\begin{theorem}
Let the data $\wt{S} = (X_i, \wt{Y}_i)_{i=1}^n = (X_i, Y_i + \xi_i)_{i=1}^n$ be a V-geometrically ergodic Markov chain with bounded noise. For all $\epsilon \in (0,3M]$ and $\delta\in (0,1)$, if the effective sample size $n_e$ satisfies
\begin{eqnarray*}
n_e &\geq& \frac{1152M^2}{\epsilon^2} \left( \ln\frac{2}{\delta} + \ln (1+\gamma B e^{-2}) + \ln \mc{N} \left (\mc{H},\frac{\epsilon}{48L} \right ) \right) + \left( \frac{3 \left( \ln \frac{1}{\mc{V}_{\epsilon/4}} + \ln\frac{2M}{\epsilon} \right)}{2 \epsilon \ln \frac{1}{\alpha} \ln \frac{1}{\rho} } \right )^{1/2},
\end{eqnarray*}
then $\displaystyle \,\,\, \bb{P} \left( l(\ov{h}_n) - \gamma^* \leq \epsilon + \Xi \right) \geq 1 - \delta$.
\label{thm02-noisy}
\end{theorem}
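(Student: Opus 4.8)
The plan is to mirror the proof of Theorem \ref{thm02}, substituting the noisy versions of the lemmas. First I would apply the convexity of $|\cdot|$ (equivalently, the triangle inequality, using that $\int_{\mc{H}} P_n(h)\,d\mu = 1$) together with Fubini to get
\[ l(\ov{h}_n) = \bb{E}_{(X,Y) \sim \pi}\left| \int_{\mc{H}} P_n(h) h(X)\, d\mu - Y \right| \leq \int_{\mc{H}} P_n(h)\, \bb{E}_{(X,Y) \sim \pi}|h(X)-Y|\, d\mu. \]
The point to flag here is that, although the algorithm is now trained on the noisy sample $\wt{S}$, the test-time loss $l(\cdot)$ and the quantities $\mc{H}_\gamma$, $\gamma^*$, $\mc{V}_\epsilon$ are all still defined with respect to the \emph{clean} stationary distribution $\pi$; so this step, and the subsequent bounds on $\bb{E}_{(X,Y)\sim\pi}|h(X)-Y|$, are identical to the noiseless case. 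The noise enters only through the weights $P_n(h)$, via the noisy empirical loss $l_{\wt{S}}$.

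Next I would split the integral over $\mc{H}$ into the good region $\mc{H}_{\gamma^* + \epsilon/2 + \Xi}$ and its complement. On the good region, $\bb{E}_{(X,Y)\sim\pi}|h(X)-Y| = l(h) \leq \gamma^* + \epsilon/2 + \Xi$ and $\int_{\mc{H}} P_n(h)\,d\mu = 1$, so that part contributes at most $\gamma^* + \epsilon/2 + \Xi$. For the complement I would invoke Lemma \ref{thm01-noisy} with $\epsilon$ replaced by $\epsilon/2$: provided
\[ n_e \geq \frac{1152M^2}{\epsilon^2}\left( \ln\frac{2}{\delta} + \ln(1+\gamma B e^{-2}) + \ln \mc{N}\left(\mc{H}, \frac{\epsilon}{48L}\right) \right), \]
with probability at least $1-\delta$ we have $\sup_{h \in \mc{H} \setminus \mc{H}_{\gamma^* + \epsilon/2 + \Xi}} P_n(h) \leq \alpha^{n\epsilon/12}/\mc{V}_{\epsilon/4}$; combined with $\bb{E}_{(X,Y)\sim\pi}|h(X)-Y| \leq M$ and $\mu(\mc{H} \setminus \mc{H}_{\gamma^* + \epsilon/2 + \Xi}) \leq 1$, this part contributes at most $M\alpha^{n\epsilon/12}/\mc{V}_{\epsilon/4}$. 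Hence, on this event, $l(\ov{h}_n) \leq \gamma^* + \epsilon/2 + \Xi + M\alpha^{n\epsilon/12}/\mc{V}_{\epsilon/4}$.

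Finally I would drive the residual term below $\epsilon/2$: the inequality $M\alpha^{n\epsilon/12}/\mc{V}_{\epsilon/4} \leq \epsilon/2$ holds once $n \geq \frac{12}{\epsilon \ln(1/\alpha)}(\ln\frac{1}{\mc{V}_{\epsilon/4}} + \ln\frac{2M}{\epsilon})$, and — exactly as in the proof of Theorem \ref{thm02}, translating this into a condition on $n_e$ through the definition of the effective sample size — it suffices that
\[ n_e \geq \left( \frac{3\left( \ln \frac{1}{\mc{V}_{\epsilon/4}} + \ln\frac{2M}{\epsilon} \right)}{2 \epsilon \ln \frac{1}{\alpha} \ln \frac{1}{\rho}} \right)^{1/2}. \]
Requiring $n_e$ to exceed the sum of the two displayed lower bounds then gives $l(\ov{h}_n) - \gamma^* \leq \epsilon + \Xi$ with probability at least $1-\delta$. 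I do not expect a genuine obstacle: the only new feature relative to Theorem \ref{thm02} is that Lemma \ref{thm01-noisy} widens the good set by $\Xi$ compared with Lemma \ref{thm01}, and that extra $\Xi$ simply carries through the splitting argument additively; the one thing to state carefully is precisely that the noise perturbs only $P_n$ and leaves $l(h)$, $\gamma^*$, and $\mc{V}_{\epsilon/4}$ untouched, so no further loss is incurred beyond the single additive $\Xi$.
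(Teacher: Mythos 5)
Your proposal is correct and follows exactly the paper's route: the paper's own proof is the one-line remark that one repeats the proof of Theorem \ref{thm02} but partitions $\mc{H}$ into $\mc{H}_{\gamma^*+\epsilon/2+\Xi}$ and its complement and invokes Lemma \ref{thm01-noisy} in place of Lemma \ref{thm01}, which is precisely the decomposition and substitution you carry out. Your expanded version, including the observation that the noise affects only $P_n$ while $l(\cdot)$, $\gamma^*$, and $\mc{V}_{\epsilon/4}$ remain defined with respect to the clean stationary distribution $\pi$, matches the intended argument in every detail.
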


\begin{proof}
The proof for this theorem is essentially the same as that of Theorem \ref{thm02}, except that we partition $\mathcal{H}$ into $\mc{H}_{\gamma^*+\epsilon/2 + \Xi}$ and $\mc{H} \setminus \mc{H}_{\gamma^*+\epsilon/2 + \Xi}$ after the first inequality and then apply Lemma \ref{thm01-noisy} instead of Lemma \ref{thm01}.
\end{proof}

From Theorem \ref{thm02-noisy}, with high probability, the expected loss of $\ov{h}_n$ is at most $\epsilon + \Xi$ larger than the optimal loss when we allow noise with range $\Xi$ in the training data. This shows that the BWA algorithm is robust in the sense that it does not perform too badly if the level of noise in the training data is small. In the noiseless case where $\Xi = 0$, we can recover Theorem \ref{thm02}. Thus, Theorem \ref{thm02-noisy} is a generalization of Theorem \ref{thm02} to the bounded noise case.

\section{Applications to other Settings}
Our results in Section \ref{sec:AH} and \ref{sec:robustness} are proven for the regression problem when the pairs of observation and target variables are V-geometrically ergodic. We now prove that our results can be easily applied to other common settings such as the classification problem and the case where there exists an unknown deterministic target hypothesis. The discussion in Section \ref{sec:classification} is for the noiseless training data, while the discussion in Section \ref{sec:target} can be applied to both the noiseless and noisy cases. In this section, we let $\mb{1}_A$ be the indicator function for the event $A$.

\subsection{The Classification Problem}
\label{sec:classification}
For the classification problem, the training data $S = (X_i, Y_i)_{i=1}^n$ satisfy $Y_i \in \{ 0, 1 \}$ for $i = 1, 2, \ldots, n$; and during testing, we need to predict the label $Y \in \{ 0, 1\}$ of a given data point $X$. If the hypothesis space $\mc{H}$ contains the hypotheses $h$ satisfying $h(X') = \bb{P}(Y'=1 | X', h)$ for all $X' \in \mc{X}$, we can apply Algorithm \ref{algo:average} to compute $\ov{h}_n$ and use its value to construct the following random classifier:
\[ c_n(X) = 
  \begin{cases}
   1 &\text{with probability} ~ \ov{h}_n(X)\\
   0 &\text{with probability} ~ 1 - \ov{h}_n(X).
  \end{cases}
\]
Let $\varepsilon(c_n) = \bb{P}_{(X,Y) \sim \pi} \left( c_n(X) \ne Y \right)$ be the expected error of $c_n$. The following lemma shows that $\varepsilon(c_n)$ is equal to the expected loss of $\ov{h}_n$. Thus, we can bound the probability $\bb{P} \left( \varepsilon(c_n) - \gamma^* \leq \epsilon \right)$ using this lemma and Theorem \ref{thm02}.

\begin{lemma}
For all $n \geq 1$, we have $\varepsilon(c_n) = l(\ov{h}_n)$.
\label{lem03}
\end{lemma}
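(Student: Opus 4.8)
The plan is to compute the error probability of the random classifier $c_n$ by conditioning on the test pair $(X,Y)$ (and nothing else) and then observing that the conditional error probability coincides pointwise with $|\ov{h}_n(X)-Y|$. First I would record the one structural fact that makes everything work: in the classification setting every $h \in \mc{H}$ satisfies $h(X') = \bb{P}(Y'=1\mid X',h) \in [0,1]$, so $\ov{h}_n(X) = \int_{\mc{H}} P_n(h)\,h(X)\,d\mu$ is a $\mu$-average (with $P_n$ a density) of numbers in $[0,1]$ and hence itself lies in $[0,1]$. This is exactly what lets the absolute value $|\ov{h}_n(X)-Y|$ unwind cleanly in the two cases $Y=0$ and $Y=1$.

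Next, fix a realization of $(X,Y)$ with $Y \in \{0,1\}$ and use the definition of $c_n$, whose internal coin flip is independent of $(X,Y)$. If $Y=1$, then $c_n(X) \ne Y$ precisely when the coin returns $0$, which has probability $1 - \ov{h}_n(X)$; and since $\ov{h}_n(X) \in [0,1]$, $|\ov{h}_n(X)-Y| = |\ov{h}_n(X)-1| = 1-\ov{h}_n(X)$. If $Y=0$, then $c_n(X)\ne Y$ precisely when the coin returns $1$, which has probability $\ov{h}_n(X) = |\ov{h}_n(X)-0| = |\ov{h}_n(X)-Y|$. In both cases the conditional error probability equals $|\ov{h}_n(X)-Y|$.

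Finally I would take expectation over $(X,Y)\sim\pi$ using the tower property:
\[ \varepsilon(c_n) = \bb{E}_{(X,Y)\sim\pi}\big[\,\bb{P}\big(c_n(X)\ne Y \mid X,Y\big)\,\big] = \bb{E}_{(X,Y)\sim\pi}\,|\ov{h}_n(X)-Y| = l(\ov{h}_n), \]
which is the claim. There is no genuine obstacle here; the only points needing a moment's care are justifying $\ov{h}_n(X)\in[0,1]$ so the absolute value splits correctly, and keeping the classifier's auxiliary randomness (independent of the data and of the stationary draw) conceptually separate from the draw of the test point when invoking the tower property.
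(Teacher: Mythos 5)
Your proof is correct and follows essentially the same route as the paper: both condition on $(X,Y)$, identify the conditional error probability with $|\ov{h}_n(X)-Y|$ (the paper states this as the Bernoulli parameter of the indicator, while you verify it explicitly via the two cases $Y=0$ and $Y=1$), and conclude by the tower property. Your added observation that $\ov{h}_n(X)\in[0,1]$ is a worthwhile detail the paper leaves implicit.
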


\begin{proof}
Note that $\bb{P}(c_n(X) \neq Y|X,Y) \sim \text{Bernoulli}(|\ov{h}_n(X)-Y|)$. Thus,
\begin{eqnarray*}
\varepsilon(c_n) &=& \bb{P}_{(X,Y) \sim \pi} \left( c_n(X) \ne Y \right) = \bb{E}_{(X,Y) \sim \pi} \left[ \mb{1}_{c_n(X) \ne Y } \right]  \\
&=& \bb{E}_{(X,Y) \sim \pi} \left [ \bb{E} \left [ \mb{1}_{c_n(X) \ne Y } | X,Y \right ] \right ] = \bb{E}_{(X,Y) \sim \pi}|\ov{h}_n(X) - Y| = l(\ov{h}_n).
\end{eqnarray*}
\end{proof}

\subsection{When a Target Hypothesis Exists}
\label{sec:target}
When there exists an unknown deterministic target hypothesis $c : \mc{X} \to \mc{Y}$ such that $Y_i = c(X_i)$ for all $i = 1, 2, \ldots, n$ and the observation variables $(X_i)_{i=1}^n$ form a V-geometrically ergodic Markov chain, the following lemma shows that the chain $(X_i, c(X_i))_{i=1}^n$ is V-geometrically ergodic. Thus, our previous results can still be applied in this situation. Note that in our lemma, $c$ may not be in $\mc{H}$.

\begin{lemma}
Let $V_X : \mc{X} \to [1,\infty)$ be a measurable function and $(X_i)_{i=1}^n$ be a $V_X$-geometrically ergodic Markov chain on $\mc{X}$. For any deterministic function $c : \mc{X} \to \mc{Y}$, the chain $(X_i, Y_i)_{i=1}^n = (X_i, c(X_i))_{i=1}^n$ is a V-geometrically ergodic Markov chain on $\mc{X} \times \mc{Y}$ with respect to some measurable function $V : \mc{X} \times \mc{Y} \to [1,\infty)$.
\label{lem02}
\end{lemma}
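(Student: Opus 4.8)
The plan is to realize the pair chain $(X_i, c(X_i))_{i=1}^n$ as a pushforward of the given $V_X$-geometrically ergodic chain $(X_i)_{i=1}^n$ under the graph map, and to exploit the fact that pushforward is non-expansive for the total variation distance so that the ergodicity constants of the $X$-chain carry over verbatim. First I would fix notation: let $\Phi : \mc{X} \to \mc{X} \times \mc{Y}$ be the (measurable) map $\Phi(x) = (x, c(x))$, and let $\pi_X$, $\gamma_X < \infty$, $\rho_X < 1$, $B_X < \infty$, and $P_X^n(\cdot \mid x)$ denote the stationary distribution, the constants from Definition \ref{def:vgeo}, and the $n$-step transition kernels of $(X_i)_{i=1}^n$. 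Since $Y_i = c(X_i)$ is a deterministic function of $X_i$, the pair $(X_i, Y_i)_{i=1}^n$ is again a time-homogeneous Markov chain, and from the Markov property of $(X_i)$ one reads off directly that for every $n \ge 1$ its $n$-step transition kernel is the pushforward $P^n(\cdot \mid (x,y)) = \Phi_* P_X^n(\cdot \mid x)$ (after one step only the first coordinate matters, so the starting value $y$ is irrelevant) and its stationary distribution is $\pi = \Phi_* \pi_X$. These are well-defined Borel measures on the compact set $\mc{X} \times \mc{Y} \subseteq \bb{R}^{d+1}$ because $c$ is measurable.

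The key estimate is the elementary fact that for any measurable map $\Phi$ and any probability measures $\mu_1, \mu_2$ on its source, $\|\Phi_* \mu_1 - \Phi_* \mu_2\|_{TV} \le \|\mu_1 - \mu_2\|_{TV}$, since every measurable $A$ in the target pulls back to a measurable set $\Phi^{-1}(A)$. Applying this with $\mu_1 = P_X^n(\cdot \mid x)$ and $\mu_2 = \pi_X$ gives
\[ \| P^n(\cdot \mid (x,y)) - \pi \|_{TV} \;=\; \| \Phi_* P_X^n(\cdot \mid x) - \Phi_* \pi_X \|_{TV} \;\le\; \| P_X^n(\cdot \mid x) - \pi_X \|_{TV} \;\le\; \gamma_X \, \rho_X^{\,n} \, V_X(x). \]

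It then remains only to choose the witnessing data for the pair chain: set $\gamma = \gamma_X$, $\rho = \rho_X$, $B = B_X$, and define $V : \mc{X} \times \mc{Y} \to [1,\infty)$ by $V(x,y) = V_X(x)$, which is measurable (as $V_X$ composed with the projection onto the first coordinate) and takes values in $[1,\infty)$ because $V_X$ does. The displayed inequality is then exactly the first condition of Definition \ref{def:vgeo} for $(X_i, Y_i)_{i=1}^n$, and the second follows from $\pi = \Phi_* \pi_X$ and the change-of-variables formula for pushforwards, together with the fact that the integrand depends only on $x$: $\int_{\mc{X}\times\mc{Y}} V \, d\pi = \int_{\mc{X}} V_X(x)\, \pi_X(dx) < B_X = B$. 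This completes the proof.

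I do not expect a genuine obstacle; the only points needing care are bookkeeping ones, namely (i) verifying that $(X_i, c(X_i))_{i=1}^n$ is Markov and identifying its kernel and stationary law as pushforwards under $\Phi$ (this uses only the Markov property of $(X_i)$ and measurability of $c$), (ii) ensuring the total-variation bound is asserted for every starting state $(x,y) \in \mc{X} \times \mc{Y}$ and not merely those on the graph of $c$ — which is automatic since $P^n(\cdot \mid (x,y))$ forgets $y$ for $n \ge 1$ — and (iii) invoking the pushforward contraction for $\|\cdot\|_{TV}$, which is precisely what allows the constants $\gamma_X, \rho_X, B_X$ to be reused without change.
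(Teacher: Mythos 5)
Your proof is correct, and it differs from the paper's in one substantive choice: how the transition kernel of the pair chain is defined at states $(x,y)$ with $y \ne c(x)$, which the process never actually visits but which Definition~\ref{def:vgeo} forces you to handle. You define $P(\cdot \mid (x,y)) = \Phi_* P_X(\cdot \mid x)$ for \emph{every} $y$, so the kernel simply forgets the second coordinate; then $P^n(\cdot\mid(x,y)) = \Phi_* P^n_X(\cdot\mid x)$ for all $n\ge 1$, the non-expansiveness of total variation under pushforward gives the geometric bound immediately, and you can take $V(x,y) = V_X(x)$ with the constants $\gamma_X,\rho_X,B_X$ unchanged. The paper instead defines the kernel at off-graph points so that the chain first jumps to $(x, c(x))$ in one step and only then follows the $X$-dynamics; this costs one step of mixing, so $P^n(\cdot\mid(x,y))$ tracks $P_X^{n-1}(\cdot\mid x)$ there and the paper must inflate the Lyapunov function to $V(x,y) = V_X(x)/\rho$ off the graph. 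Both extensions are legitimate because the kernel at never-visited states is a free choice consistent with the law of $(X_i, c(X_i))_{i=1}^n$; yours yields a marginally cleaner witness $V$ and makes the pushforward-contraction mechanism explicit, whereas the paper verifies the bound by writing out the $n$-step kernels case by case. The one phrase worth softening is that the $n$-step kernel at off-graph points is ``read off directly'' from the Markov property: it is not determined by the process there, only chosen --- a point your remark (ii) already implicitly concedes.
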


\begin{proof}
Let $P_X$ be the one-step transition probability of $(X_i)_{i=1}^n$. It is easy to see that $(X_i, Y_i)_{i=1}^n = (X_i, c(X_i))_{i=1}^n$ is a Markov chain on $\mc{X} \times \mc{Y}$ with the following one-step transition probability $P$:
\begin{eqnarray*}
P(x', y' | x, y) &=&  
\begin{cases} 
P_X(x' | x) \cdot \mb{1}_{ y' = c(x') } & \mbox{if } y = c(x) \\
\mb{1}_{ y' = c(x) } \cdot \mb{1}_{ x' = x } & \mbox{if } y \ne c(x).
\end{cases}
\end{eqnarray*}
Intuitively, after taking the first step (from $(X_1, Y_1)$ onwards), the new Markov chain on $\mc{X} \times \mc{Y}$ will transit around the points in $\{ (x, c(x)) : x \in \mc{X} \}$ with the same probabilities as the transitions on $\mc{X}$. Thus, the new Markov chain has the stationary distribution $\pi(x,y) = \pi_X(x) \cdot \mb{1}_{ y = c(x) }$, where $\pi_X$ is the stationary distribution of $(X_i)_{i=1}^n$. 
Let $\gamma$, $\rho$, and $B$ be the parameters satisfying Definition \ref{def:vgeo} for the chain $(X_i)_{i=1}^n$ and consider the measurable function $V$ as follows:
\begin{eqnarray*}
V(x,y) &=& 
\begin{cases} 
V_X(x)/\rho & \mbox{if } y \ne c(x) \\ 
V_X(x) & \mbox{if } y = c(x).
\end{cases}
\end{eqnarray*}

We have $\int_{\mc{X} \times \mc{Y}} V(x,y)\pi(x,y)d(x,y) = \int_{\mc{X}} V_X(x)\pi_X(x)dx < B$. Furthermore, for any two points $(x,y)$ and $(x',y')$ in $\mc{X} \times \mc{Y}$, the n-step transition probability from $(x,y)$ to $(x',y')$ satisfies:
\begin{eqnarray*}
P^n(x', y' | x, y) &=&  
\begin{cases} 
P^n_X(x' | x) \cdot \mb{1}_{ y' = c(x') } & \mbox{if } y = c(x) \\
P^{n-1}_X(x' | x) \cdot \mb{1}_{ y' = c(x') } & \mbox{if } y \ne c(x).
\end{cases}
\end{eqnarray*}

Thus, for all $n \geq 1$, we have: $\|P^n(x', y' | x, y) - \pi(x', y')\|_{TV} \leq \gamma \rho^n V(x,y)$. Hence, $(X_i, Y_i)_{i=1}^n$ satisfies the V-geometrically ergodic definition with the same parameters $\gamma$, $\rho$, $B$ and the function $V$ above.
\end{proof}

\section{Conclusion} \label{sec:con}
A good property of the BWA algorithm is that the normalized weights of the good hypotheses will eventually dominate those of the bad ones when more training data are obtained. This property enables us to obtain its generalization and robustness bounds for V-geometrically ergodic Markov data. The bounds can be applied to various settings such as the regression problem, the classification problem, and the case where there exists a deterministic target hypothesis. Our results show that the BWA algorithm is consistent and robust for V-geometrically ergodic Markov data. So, when overfitting is involved or when optimizing the empirical risk is hard, it may be a good replacement for the ERM algorithm.

\bibliographystyle{splncs}
\bibliography{llncs}

\end{document}